\RequirePackage{amsmath}
\documentclass[preprint,3p,number]{elsarticle}

\usepackage[T1]{fontenc}
\usepackage[utf8]{inputenc}
\usepackage[english]{babel}
\usepackage{lmodern}
\usepackage{arydshln} 
\usepackage{tikz}
\usepackage{pgfplots}
\usepackage{listings}
\usepackage{amssymb}
\usepackage[hidelinks]{hyperref}

\usetikzlibrary{shapes}
\pgfplotsset{compat=1.14}

\newdefinition{definition}{Definition}
\newdefinition{property}{Property}
\newdefinition{problem}{Problem}
\newdefinition{example}{Example}
\newtheorem{lemma}{Lemma}
\newtheorem{theorem}{Theorem}
\newproof{proof}{Proof}

\DeclareMathOperator*{\argmax}{argmax}
\DeclareMathOperator*{\softmax}{softmax}
\renewcommand{\vec}[1]{\bar{#1}}

\newcommand{\mcc}[1]{\multicolumn{1}{c}{#1}}

\journal{Science of Computer Programming}

\begin{document}

\begin{frontmatter}
  \title{Formal Verification of Input-Output Mappings of Tree Ensembles}
  \author{John Törnblom\corref{jt}}
  \cortext[jt]{Corresponding author}
  \ead{john.tornblom@saabgroup.com}
  \address{Saab AB\\
           Bröderna ugglas gata, Linköping, Sweden}
  \author{Simin Nadjm-Tehrani\corref{snt}}
  \ead{simin.nadjm-tehrani@liu.se}
  \address{Dept. of Computer and Information Science\\
           Linköping University, Linköping, Sweden}

\begin{abstract}
Recent advances in machine learning and artificial intelligence are now being
considered in safety-critical autonomous systems where software defects may
cause severe harm to humans and the environment. Design organizations in these
domains are currently unable to provide convincing arguments that their systems
are safe to operate when machine learning algorithms are used to implement their
software.

In this paper, we present an efficient method to extract equivalence classes 
from decision trees and tree ensembles, and to formally verify that their 
input-output mappings comply with requirements. The idea is that, given that 
safety requirements can be traced to desirable properties on system input-output 
patterns, we can use positive verification outcomes in safety arguments.
This paper presents the implementation of the method in the tool VoTE (Verifier 
of Tree Ensembles), and evaluates its scalability on two case studies presented 
in current literature. 
We demonstrate that our method is practical for tree ensembles trained on 
low-dimensional data with up to 25 decision trees and tree depths of up to 20.
Our work also studies the limitations of the method with high-dimensional
data and preliminarily investigates the trade-off between large number of trees 
and time taken for verification.
\end{abstract}

\begin{keyword}
  Formal verification \sep
  Decision tree \sep
  Tree ensemble \sep
  Random forest \sep 
  Gradient boosting machine
\end{keyword}

\end{frontmatter}

\section{Introduction}
In recent years, artificial intelligence utilizing machine learning algorithms
has begun to outperform humans at several tasks, e.g., playing board 
games~\cite{Silver16} and diagnosing skin cancer~\cite{Esteva17}. These advances
are now being considered in safety-critical autonomous systems where software 
defects may cause severe harm to humans and the environment, e.g, airborne
collision avoidance systems~\cite{Julian16}.

Several researchers have raised concerns~\cite{Burton17,Kurd06,Russell15} 
regarding the lack of verification methods for these kinds of systems in which 
machine learning algorithms are used to train software deployed in the system.
Within the avionics sector, guidelines~\cite{DO333} describe how design
organizations may apply formal methods to the verification of safety-critical
software. Applying these methods to complex and safety-critical software is a
non-trivial task due to practical limitations in computing power and challenges
in qualifying complex verification tools. These challenges are often caused by
a high expressiveness provided by the language in which the software is
implemented in. Most research that apply formal methods to the verification of
machine learning is so far focused on the verification of neural networks, but 
there are other models that may be more appropriate when verifiability is 
important, e.g., decision trees~\cite{Breiman84}, random forests~\cite{Breiman01}
and gradient boosting machines~\cite{Friedman01}. Where neural networks are
subject to verification of correctness, they are usually adopted in
non-safety-critical cases, e.g., fairness with respect to individual
discrimination, where probabilistic guarantees are plausible~\cite{Bastani18b}.
Since our aim is to support safety arguments for digital artefacts that may
be deployed in hazardous situations, we need to rely on guarantees of absence
of misclassifications, or at least recognize when such guarantees cannot be
provided. The tree-based models provide such an opportunity since their
structural simplicity makes them easy to analyze systematically, but large
(yet simple) models may still prove hard to verify due to combinatorial
explosion.

This paper is an improved and substantially extended version of our previous
work~\cite{Tornblom19}. In that work, we developed a method to partition the 
input domain of decision trees into disjoint sets, and to explore all path 
combinations in random forests in such a way that counteracts combinatorial path
explosions. In this paper, we generalize the method to also apply to gradient 
boosting machines and implement it in a tool named VoTE. 
The paper also evaluates the tool with respect to performance. 
The VoTE source code (which includes automation of the case studies) is
published as free 
software.\footnote{https://github.com/john-tornblom/vote/tree/v0.1.1}
Compared to previous works, the contributions of this paper are as follows.
\begin{itemize}
  \item A generalization to include more tree ensembles, e.g., gradient boosting
        machines, with an updated tool support (VoTE).
  \item A Soundness proof of the associated approximation technique used for 
        this purpose.
  \item An improved node selection strategy that yields speed
        improvements in the range of 4\,\%--91\,\% 
        for trees with a depth of $10$ or more in the two demonstrated case
        studies.
\end{itemize}

The rest of this paper is structured as follows. Section~\ref{sec:preliminaries}
presents preliminaries on decision trees, tree ensembles, and a couple of 
interesting properties subject to verification. Section~\ref{sec:related-works} 
discusses related works on formal verification and machine learning, and 
Section~\ref{sec:main-contrib} presents our method with our supporting tool 
VoTE. Section~\ref{sec:case-studies} presents applications of our method on two
case studies; a collision detection problem, and a digit recognition problem. 
Finally, Section~\ref{sec:conclusions} concludes the paper and summarizes the 
lessons we learned.

\section{Preliminaries}
\label{sec:preliminaries}
In this section, we present preliminaries on different machine learning models
and their properties that we consider for verification in this paper. 

\subsection{Decision Trees}
In machine learning, decision trees are used as predictive models to capture
statistical properties of a system of interest.
\begin{definition}[Decision Tree]
  A decision tree implements a prediction function $t: X^n \rightarrow \mathbb{R}^m$
  that maps disjoint sets of points $X_i \subset X^n$ to a single output point
  $\vec{y}_i \in \mathbb{R}^m$, i.e.,
  \[
  t(\vec{x}) = 
  \begin{cases} 
    (y_{1,1}, \ldots, y_{1,m}) & \vec{x} \in X_{1} \\
    \hfill \vdots\\
    (y_{k,1}, \ldots, y_{k,m}) & \vec{x} \in X_{k}, \\
  \end{cases}
  \] where $k$ is the number of disjoint sets and
  $X^n = \bigcup\limits_{i=1}^{k} X_{i}$.
\end{definition}
For perfectly balanced binary decision trees, $k=2^d$, where $d$ is the 
tree depth. Each internal node is associated with a decision function that
separates points in the input space from each other, and the leaves define 
output values. The n-dimensional input domain $X^n$ includes elements 
$\vec{x}$ as tuples in which each element ${x}_i$ captures some feature of 
the system of interest as an input variable. The tree structure is evaluated in
a top-down manner, where decision functions determine which path to take towards
the leaves. When a leaf is hit, the output $\vec{y} \in \mathbb{R}^m$ associated 
with the leaf is emitted. Figure~\ref{figure:dd-example} depicts a decision tree
with one decision function ($x \leq 0$) and two outputs (1 and 2).
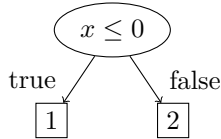
\begin{figure}[ht]
  \centering
  \begin{tikzpicture}[nodes={ellipse,draw}, ->, scale=0.8]
    \tikzstyle{level 1}=[sibling distance=20mm]
    \tikzstyle{every node}=[draw=black, ellipse, align=center]
    \draw
      node{$x \leq 0$}
        child { node[rectangle] {1} 
          edge from parent node[left,draw=none] {true} 
        }
        child { node[rectangle] {2} 
          edge from parent node[right,draw=none] {false}
      };
  \end{tikzpicture}
  \caption{A decision tree with two possible outputs, depending on the value
    of single input variable $x$.}
  \label{figure:dd-example}
\end{figure}

In general, decision functions are defined by non-linear combinations of
several input variables at each internal node. In this paper, we only 
consider binary trees with linear decision functions with one input variable,
which Irsoy et al. call univariate hard decision trees~\cite{Irsoy12}.
As illustrated by Figure~\ref{figure:dd-boxes}, a univariate hard decision tree
forms hyperrectangles (boxes) that split the input space along axes in the
coordinate system.
\begin{figure}[ht]
  \centering
  \begin{tikzpicture}[scale=0.6]
    \tiny
    \draw[->] (0,0) -- (7,0) node[right] {$x_1$}; 
    \draw[->] (0,0) -- (0,5) node[above] {$x_2$};
    \draw (3,0) -- (3,5);
    \draw (3,3) -- (7,3);

    \draw (1.5,2) node[above] {$\vec{x} \in X_1$};
    \draw (5,4) node[above] {$\vec{x} \in X_2$};
    \draw (5,1) node[above] {$\vec{x} \in X_3$};
  \end{tikzpicture}
  \caption{The input space of a univariate hard decision tree, which splits the 
           input space along axes in the coordinate system, thus forming boxes.}
  \label{figure:dd-boxes}
\end{figure}
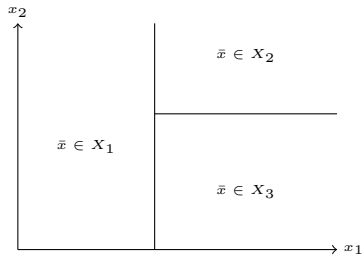

Although researchers have demonstrated that non-linear~\cite{Irsoy12} and
multivariate decision trees~\cite{WangF18} can be useful, state-of-the-art
implementations of tree ensembles normally use hard univariate decision trees,
e.g., scikit-learn~\cite{Pedregosa11} and CatBoost~\cite{Prokhorenkova18}.

\subsection{Random Forests}
\label{sec:random-forest}
Decision trees are known to suffer from a phenomenon called overfitting. Models
suffering from this phenomenon can be fitted so tightly to their training data
that their performance on unseen data is reduced the more you train them. To
counteract these effects in decision trees, Breiman~\cite{Breiman01} proposes
random forests.
\begin{definition}[Random Forest]
\label{def:random-forest}
A random forest implements a prediction function 
$f_{\mathit{rf}}: X^n \rightarrow \mathbb{R}^m$ as the arithmetic mean of the
predictions from $B$ decision trees, i.e.,
$$
f_{\mathit{rf}}(\vec{x}) = \frac{1}{B} \sum\limits_{b=1}^{B} t_b(\vec{x}),
$$ where $t_b$ is the $b$-th tree. 
\end{definition}
To reduce correlation between trees, each tree is trained on a random subset of 
the training data, using potentially overlapping random subsets of the input
variables.

\subsection{Gradient Boosting Machines}
\label{sec:gradient-boosting}
Similarly, Freidman~\cite{Friedman01} introduces a machine learning model called 
gradient boosting machine that uses several decision trees to implement a 
prediction function. Unlike random forests, these trees are trained in a
sequential manner. Each consecutive tree tries to compensate for errors made by
previous trees by estimating the gradient of errors (using gradient decent, 
hence the name). 
\begin{definition}[Gradient Boosting Machine]
\label{def:gradient-boosting}
A gradient boosting machine implements a prediction function 
$f_{\mathit{gb}}: X^n \rightarrow \mathbb{R}^m$ as the sum of predictions from 
$B$ decision trees, i.e.,
$$
f_{\mathit{gb}}(\vec{x}) = \sum\limits_{b=1}^{B} t_b(\vec{x}),
$$ where $t_b$ is the $b$-th tree. 
\end{definition}
Typically, trees in a gradient boosting machine are significantly shallower 
than trees in a random forest, often with a tree depth in the range 2--10.
Gradient boosting machines instead capture complexity by growing more trees.

\subsection{Tree Ensemble}
\label{sec:tree-ensemble}
The learning algorithms used in random forests and gradient boosting machines
are conceptually different from each other. However, once training is completed,
the prediction functions which are the subjects to verification in this paper 
are similar. In a verification context, random forests and gradient boosting
machines can therefore be generalized as instances of a \textit{tree ensemble}.
\begin{definition}[Tree Ensemble]
\label{def:tree-ensemble}
A tree ensemble implements a prediction function $f: X^n \rightarrow \mathbb{R}^m$
as the sum of predictions from $B$ decision trees, post-processed by a function
$p: \mathbb{R}^m \rightarrow \mathbb{R}^m$, i.e.,
$$
f(\vec{x}) = p\Big(\sum\limits_{b=1}^{B} t_b(\vec{x})\Big),
$$ where $t_b$ is the $b$-th tree. 
\end{definition}

In a verification context, the only conceptual difference between a random 
forest and gradient boosting machine is the post-processing function $p$, i.e.,
a constant division $p(\vec{y}) = \frac{\vec{y}}{B}$ in the case of random 
forests, and the identity function $p(\vec{y}) = \vec{y}$ in the case of 
gradient boosting machines.

\subsection{Classifiers}
\label{sec:classifier}
Decision trees and tree ensembles may also be used as classifiers. A 
classifier is a function that categorizes samples from an input domain into
one or more classes. In this paper, we only consider functions that map 
each point from an input domain to exactly one class.
\begin{definition}[Classifier]
\label{def:classifier}
Let $f(\vec{x}) = (y_1, \ldots, y_m)$ be a model trained to predict the 
probability $y_i$ associated with a class $i$ within disjoint regions in the
input domain, where $m$ is the number of classes. Then we would expect that
$\forall i \in \{1, \ldots, m\}, 0 \leq y_i \leq 1$, and 
$\sum\limits_{i=1}^{m} y_i = 1$. A classifier $f_c(\vec{x})$ may then be 
defined as
$$
f_c(\vec{x}) = \argmax_i y_i.
$$
\end{definition}
A random forest typically infers probabilities by capturing the number of 
times a particular class has been observed within some hyperrectangle in 
the input domain of a tree during training. Training a gradient boosting 
machine to predict class membership probabilities is somewhat different,
depending on the characteristics of the used learning algorithm, and often 
involves additional post-processing of the sum of all trees. For example, when 
training multiclass classifiers in CatBoost~\cite{Prokhorenkova18}, individual
trees emit values from a logarithmic domain that are summed up, and finally
transformed and normalized into probabilities using the $\softmax$ function,
i.e.,
\[
  p(\vec{y}) = \softmax(y_1, \ldots, y_m) = \frac{(e^{y_1}, \ldots, e^{y_m})}{\sum\limits_{i=1}^{m}y_i}
\]


\subsection{Safety Properties}
\label{sec:properties}
In this paper, we consider two properties commonly used in related works;
robustness against noise and plausibility of range.\footnote{Other 
works~\cite{Pulina12} refer to this property as ``global safety''. To avoid 
confusion with the dependability term ``safety'', we instead refer to this 
property as ``plausibility of range''.} Note that compliance with 
these two properties alone is generally not sufficient to ensure safety. System
safety engineers typically define requirements on software functions that are 
richer than these two properties alone.

\begin{property}[Robustness against Noise]
  \label{prop:robustness-noise}
  Let $f: X^n \rightarrow \mathbb{R}^m$ be the function subject to verification,
  $\epsilon \in \mathbb{R}_{\ge 0}$ a robustness margin, and
  $\Delta = \{ \delta \in \mathbb{R}: -\epsilon < \delta < \epsilon \}$ noise.
  We denote by $\vec{\delta}$ an $n$-tuple of elements drawn from $\Delta$. 
  The function is robust against noise iff
  $$
  \forall \vec{x} \in X^n, \;\;
  \forall \vec{\delta} \in \Delta^n,
  f(\vec{x}) = f(\vec{x} + \vec{\delta}).
  $$
\end{property}
Pulina and Tacchella~\cite{Pulina12} define a stability property that is similar
to our notion of robustness here but use scalar noise.

\begin{property}[Plausibility of Range]
  \label{prop:plausible-ange}
  Let $f: X^n \rightarrow \mathbb{R}^m$ be the function subject to verification.
  The function has a desired plausibility of range when its output values are 
  within a stated boundary, i.e.,
  $$
  \forall \vec{x} \in X^n,
  \forall i \in \{1, \ldots, m\},
  f(\vec{x}) = (y_1, \ldots, y_m), \;\;
  \alpha_i \leq y_i \leq \beta_i.
  $$ for some $\alpha_i,\beta_i \in \mathbb{R}$.
\end{property}
In classification problems, the output tuple $(y_1, \ldots, y_m)$ contains
probabilities, and thus $\alpha_i = 0$ and $\beta_i = 1$. 

\section{Related Works}
\label{sec:related-works}
Significant progress has been made in the application of machine learning to
autonomous systems, and awareness regarding its security and safety
implications has increased. Researchers from several fields are now addressing 
these problems in their own way, often in collaboration across
fields~\cite{Dagstuhl17a}. For example, Bastani et al.~\cite{Bastani18b} verify
a property called path-specific causal fairness, which is similar to our notion
of robustness. In that work, a method that provides probabilistic guarantees
of fairness is provided, whereas our approach provides formal guarantees.

In the following sections, we group related works in to two categories; those 
that formally verify safety-critical properties of neural networks, and those 
that verify tree-based models.

\subsection{Formal Verification of Neural Networks}
There has been extensive research on formal verification of neural networks.
Pulina and Tacchella~\cite{Pulina12} combine SMT solvers with an
abstraction-refinement technique to analyze neural networks with non-linear
activation functions. They conclude that formal verification of realistically 
sized networks is still an open challenge. Scheibler et al.~\cite{Scheibler15} 
use bounded model checking to verify a non-linear neural network controlling 
an inverted pendulum. They encode the neural network and differential equations
of the system as an SMT formula, and try to verify properties without success. 
These works~\cite{Pulina12,Scheibler15} suggest that SMT solvers are currently 
unable to verify entire non-linear neural networks of realistic sizes.

Huang et al.~\cite{Huang17} present a method to verify the robustness of neural 
network classifiers against perturbations which are specified by the user, e.g.,
change in lightning and axial rotation in images. The method uses polyhedrons 
to capture intermediate perturbations that propagate recursively through the
network, and unlike Pulina and Tacchella~\cite{Pulina12}, operates in a 
layer-by-layer fashion. In each iteration, an SMT solver is used to detect if 
any manipulation captured by a polyhedral would cause a misclassification in 
the output layer. They demonstrate that their method is capable of finding many
adversarial examples, but struggle when the recursion is exhaustive and the
input dimension is large.

Katz et al.~\cite{Katz17} combine the simplex method with a SAT solver to
verify properties of deep neural networks with piecewise linear activation 
functions. They successfully verify domain-specific safety properties of a 
prototype airborne collision avoidance system trained using reinforcement
learning. The verified neural network contains a total of 300 nodes organized 
into 6 layers. Ehlers~\cite{Ehlers17} combines an LP solver with a modified 
SAT solver to verify neural networks. His method includes a technique to 
approximate the overall behavior of the network to reduce the search space 
for the SAT solver. The method is evaluated on two case studies; a collision 
detection problem, and a digit recognition problem. We reuse these two case
studies in our work, and also provide a global approximation of the overall
model (in our cases, tree ensembles).

In \cite{Ivanov18}, Ivanov et al.\ successfully verify safety properties of 
non-linear neural networks trained to approximate closed-loop control systems.
Their approach exploits the fact that the sigmoid function is a solution to a
quadratic differential equation, which enables them to transform sigmoid-based
neural networks into an equivalent non-linear hybrid system. They then 
leverage existing verification tools for hybrid systems to verify the 
reachability property. Even though verification of non-linear hybrid systems
is undecidable in general, existing methods work on many practical examples.
Dutta et al.~\cite{Dutta18} address the plausibility of range property of 
feedback controllers realized by neural networks. Similar to Huang 
et al.~\cite{Huang17}, they capture inputs as polyhedrons, but use mixed 
integer linear programming (MILP) to solve constraints, and demonstrate
practicality on neural networks with thousands of neurons.

Kouvaros and Lomuscio~\cite{Kouvaros18} analyze the robustness of convolutional
neural networks against different kinds of image perturbations, e.g., change in
contrast and scaling. They use MILP to encode the problem as a set of linear
equations which are solved layer-by-layer and evaluate their approach on 
a neural network with 1481 nodes trained on a digit recognition problem.

Tjeng et al.~\cite{Tjeng19} further improve upon previous MILP-based approaches
that verify the robustness of neural networks. In particular, they use linear
programming to improve approximations of inputs to non-linearities throughout
the verification process, and evaluate its contribution together with other
MILP-based optimisation techniques for neural networks available in the
literature.

Wang et.al.~\cite{WangS18} improve upon the work by Ehlers~\cite{Ehlers17} by
using a finer approximation technique, which they also combine with a novel
approach to identify which approximations are too conservative. When such an
overapproximation is identified, they iteratively split and branch the analysis
into smaller pieces. They demonstrate that their contributions improve
performance of several orders of magnitude in several case studies, e.g.,
airborne collision avoidance system, and a digit recognition problem (which we
also address, but for tree ensembles).

Narodytska et al.~\cite{Narodytska18} verify binary neural networks, a class of 
machine learning models that mix floating-point operations with Boolean 
operations, and as such use less memory and are more power efficient during
prediction compared to networks that use floating-point operations exclusively. 
They encode computations that use Boolean operations into a SAT problem, while
floating-point operations are first encoded using MILP, which is then mapped
into the SAT problem. They then use an off-the-shelf SAT-solver to verify 
robustness and equivalence properties. They leverage a counterexample-guided 
search procedure and the structure of neural network to speed up the search 
and demonstrate its effectiveness on the MNIST dataset.
Cheng et al.~\cite{Cheng18} also verify binary neural networks using a SAT-solver,
and present a novel factorisation technique made possible by the fact that weights
in binary neural networks are Boolean-valued. They use state-of-the-art
hardware verification tools to check satisfiability and demonstrate that the
factorisation technique is beneficial on most of the problem instances included
in their evaluation.

Mirman et al.~\cite{Mirman18} use abstract interpretation to verify robustness
of neural networks with convolution and fully connected layers. They evaluate
their method on four image classification problems (one of which we use in our
work), and demonstrate promising performance. 

In this paper, we address similar verification problems as the works mentioned 
above that analyze neural networks. Unfortunately, there is currently no 
established benchmark in the literature that facilitates a direct comparison 
between different machine learning models that takes both validation of, e.g.,
accuracy on a data set, and formal verification of safety-critical requirements
into account.

\subsection{Formal Verification of Decision Trees and Tree Ensembles}
The idea that decision trees may be easier to verify than neural networks is 
demonstrated by Bastani et al.~\cite{Bastani18}. They train a neural network to
play the game Pong, then extract a decision tree policy from the trained neural
network. The extracted tree is significantly easier to verify than the neural
network, which they demonstrate by formally verifying properties within seconds 
using an of-the-shelf SMT solver. Our method provides even better performance 
when verifying decision trees. However, our outlook is that decision trees per
se may not be sufficient for problems in non-trivial settings and hence we 
address tree ensembles which provides a counter-measure to overfitting.

In our previous work~\cite{Tornblom19}, we verify safety-critical 
properties of random forests. Two techniques are presented, a fast but 
approximate technique which yields conservative output bounds, and a slower but
precise technique employed when approximations are too conservative.
In the precise technique, we partition the input space of decision trees 
into disjoint sets, explore all feasible path combinations amongst the trees, 
then compute equivalence classes of the entire random forest. Finally, these
equivalence classes are checked against requirements. In this paper, we
generalize our original method to other tree ensembles such as gradient boosting.
We also improve the performance of the precise technique by changing the
node selection strategy, i.e., the order in which child nodes are considered
while exploring feasible path combinations.

\section{Analyzing Tree Ensembles}
\label{sec:main-contrib}
In this section, we define a process for verifying learning-based systems, and
define a formal method capable of verifying properties of decision trees and
tree ensembles. We also describe VoTE (Verifier of Tree Ensembles) that 
implements our method, and illustrate its usage with an example that verifies
the plausibility of range property of tree ensemble classifiers.

\subsection{Problem Definition}
\label{method:problem}
The software verification process for learning-based systems can be formulated
as the following problem definitions.
\begin{problem}[Constraint Satisfaction]
  Let $f: X^n \rightarrow \mathbb{R}^m$ be a function that is known to implement
  some desirable behavior in a system, and a property $\mathbb{P}$ specifying 
  additional constraints on the relationship between $\vec{x} \in X^n$ and 
  $\vec{y} \in \mathbb{R}^m$. Verify that $\forall \vec{x} \in X^n$, the 
  property $\mathbb{P}$ holds for the computations from $f$.
\end{problem}

Since the prediction function in a tree ensemble is a pure function and thus
there is no state space to explore, this problem may be addressed by considering
all combinations of paths through trees in the ensemble. Furthermore, by 
partitioning the input domain into equivalence classes, i.e., sets of points
in the input space that yield the same output, constraint satisfaction may be
verified for regions in the  input domain, rather than for individual points
explicitly.

\begin{problem}[Equivalence Class Partitioning]
  For each path combination $p$ in a tree ensemble with the prediction function
  $f: X^n \rightarrow \mathbb{R}^m$, determine the complete set of inputs
  $X_p \subseteq X^n$ that lead to traversing $p$, and the corresponding output
  $\vec{y}_p \in \mathbb{R}^m$.
\end{problem}

Our method efficiently generates equivalence classes as pairs of
$(X_p, \vec{y}_p)$, and automatically verifies the satisfaction of a property
$\mathbb{P}$. Assuming that the trees in an ensemble are of equal size, the 
number of path combinations in the tree ensemble is $2^{d \cdot B}$. 
In practice, decisions made by the individual trees are influenced by a subset
of features shared amongst several trees within the same ensemble, and thus 
several path combinations are infeasible and may be discarded from analysis.

\begin{example}[Discarded Path Combination]
  Consider a tree ensemble with the trees depicted in 
  Figure~\ref{figure:rf-example}. There are four path combinations. However, $x$
  cannot be less than or equal to zero at the same time as being greater than
  five. Consequently, Tree 1 cannot emit $1$ at the same time as Tree 2
  emits $3$, and thus one path combination may be discarded from analysis.
  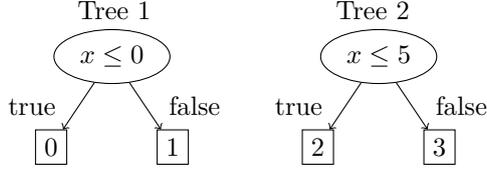
\begin{figure}[ht]
    \centering
    Tree 1 \hspace{2.2cm} Tree 2 \vspace{0.1cm}~\\
    \begin{tikzpicture}[nodes={ellipse,draw}, ->, scale=0.8]
      \tikzstyle{level 1}=[sibling distance=20mm]
      \tikzstyle{every node}=[draw=black, ellipse, align=center]
      \draw
        node{$x \leq 0$}
          child { node[rectangle] {0} 
            edge from parent node[left,draw=none] {true} 
          }
          child { node[rectangle] {1} 
            edge from parent node[right,draw=none] {false}
          };
    \end{tikzpicture}
    \begin{tikzpicture}[nodes={ellipse,draw}, ->, scale=0.8]
      \tikzstyle{level 1}=[sibling distance=20mm]
      \tikzstyle{every node}=[draw=black, ellipse, align=center]
      \draw
        node{$x \leq 5$}
          child { node[rectangle] {2} 
            edge from parent node[left,draw=none] {true} 
          }
          child { node[rectangle] {3} 
            edge from parent node[right,draw=none] {false}
          };
    \end{tikzpicture}
    \caption{Two decision trees that when combined into a tree ensemble,
      contains three feasible path combinations and one discarded path 
      combination.}
    \label{figure:rf-example}
  \end{figure}
\end{example}

We postulate that since several path combinations may be discarded from analysis,
all equivalence classes in a tree ensemble may be computed and enumerated 
within reasonable time for practical applications. To explore this idea, we 
developed the tool VoTE which automates the computation, enumeration, and
verification of equivalence classes.

\subsection{Tool Overview}
VoTE consists of two distinct components, VoTE Core and VoTE Property Checker.
VoTE Core takes as input a tree ensemble with prediction function 
$f: X^n \rightarrow \mathbb{R}^m$, a hyperrectangle defining the input domain
$X^n$ (which may include $\pm\infty$), and computes all equivalence classes in
$f$. These equivalence classes are then processed by VoTE Property Checker that
checks if all input-output mappings captured by each equivalence class are valid
according to a property $\mathbb{P}$, as illustrated by 
Figure~\ref{figure:vote-user}.

\begin{figure}[ht]
  \centering
  \includegraphics[width=9cm]{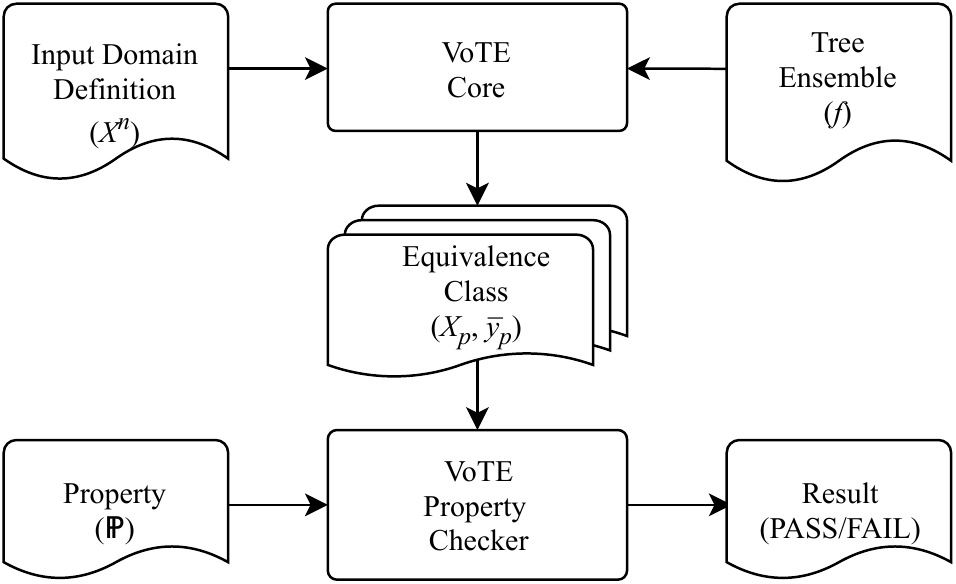}
  \caption{Overview of VoTE.}
  \label{figure:vote-user}
\end{figure}

\subsection{Computing Equivalence Classes}
There are three distinct tasks being carried out by VoTE Core while computing
equivalence classes of a tree ensemble:
\begin{itemize}
  \item partitioning the input domain of decision trees into disjoint sets,
  \item exploring all feasible path combinations in the tree ensemble,
  \item deriving output tuples from leaves.
\end{itemize}
Path exploration is performed by walking the trees depth-first. The order in
which intermediate nodes are considered is described in 
Section~\ref{sec:select-strategy}. When a leaf is hit, the output $\vec{y}_p$ 
for the traversed path combination $p$ is 
incremented with the value associated with the leaf, and path exploration 
continues with the next tree. The set of inputs $X_p$ is captured by a set of
constraints derived from decision functions associated with internal nodes 
encountered while traversing $p$. When all leaves in a path combination have 
been processed, an optional post-processing function is applied to $\vec{y}_p$,
e.g., a division by the number of trees in the case of random forests, and the
$\softmax$ function in the case of a gradient boosting machine classifier
(recall the definition of a random forest in Section~\ref{sec:random-forest}
which includes a division, and the use of the softmax function for
classifiers in Section~\ref{sec:classifier}). Finally, the VoTE Property
Checker checks if the mappings from $X_p$ to $\vec{y}_p$ comply with the
property $\mathbb{P}$. If the property holds, the next available path
combination is traversed, otherwise verification terminates with a ``FAIL'' and
provides the most recent ($X_p$, $\vec{y}_p$) mapping as a counterexample.

\subsection{Node Selection Strategy}
\label{sec:select-strategy}
Each decision function effectively splits the input domain into smaller pieces
throughout the analysis. When a joint evaluation of two decision functions yields
an empty set of points, our method concludes an infeasible path combination and 
continues with the next path combination. One way of improving the performance is
to reduce the time spent on analyzing infeasible path combination by discovering 
them early. Consider the example depicted in Figure~\ref{figure:dd-split-strategy}.
When performing the split as illustrated by the dashed line, the left-hand slice
$X_4$ contains significantly fewer points than the right-hand slice $X_5$. Our
method is based on the idea that by selecting the child nodes in an order based 
on the number of points captured by each slice, splits that yield empty sets of 
points are encountered earlier.
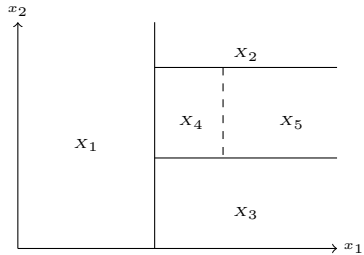
\begin{figure}[ht]
  \centering
  \begin{tikzpicture}[scale=0.6]
    \tiny
    \draw[->] (0,0) -- (7,0) node[right] {$x_1$}; 
    \draw[->] (0,0) -- (0,5) node[above] {$x_2$};
    \draw (3,0) -- (3,5);
    \draw (3,2) -- (7,2);
    \draw (3,4) -- (7,4);
    \draw [dashed] (4.5,4) -- (4.5,2);

    \draw (1.5,2) node[above] {$X_1$};
    \draw (5,4) node[above] {$X_2$};
    \draw (3.8,2.5) node[above] {$X_4$};
    \draw (6,2.5) node[above] {$X_5$};
    \draw (5,0.5) node[above] {$X_3$};
  \end{tikzpicture}
  \caption{An example used to illustrate our node selection strategy. The 
           dashed line indicates a split of a hyperrectangle into two pieces. 
           Our node selection strategy considers the piece with the least 
           number of points first.}
  \label{figure:dd-split-strategy}
\end{figure}

We believe that by choosing the child node which captures the least number of
points first, significant performance is to be expected compared to a static
selection strategy, e.g., by always selecting the left child first as
implemented in previous work~\cite{Tornblom19}.
\subsection{Approximating Output Bounds}
\label{sec:approximating-output-bounds}
The output of a tree ensemble may be bounded by analyzing each leaf in the
collection of trees exactly once. Assuming that all trees are of equal size, 
the number of leaves in a tree ensemble is $B \cdot 2^d$, where $B$ is the 
number of trees and $d$ the tree depth, thus making the analysis scale linearly
with respect to the number of trees.

\begin{definition}[Approximate Tree Output Bounds]
Let $t: X^n \rightarrow \mathbb{R}^m$ be a decision tree with $k$ leaves, and
$T = \{t(\vec{x}): \forall \vec{x} \in X^n\}$ the image of $t$, i.e., the set of 
output tuples associated with those leaves. We then approximate the output of 
$t$ as an interval $[\vec{t}_{\min}, \vec{t}_{\max}]$, where
\begin{gather*}
  \vec{t}_{\min} = (\min\{T_{1, 1}, \ldots, T_{k, 1}\},
                     \ldots,
                     \min\{T_{1, m}, \ldots, T_{k, m}\}), \\
  \vec{t}_{\max} = (\max\{T_{1, 1}, \ldots, T_{k, 1}\},
                     \ldots,
                     \max\{T_{1, m}, \ldots, T_{k, m}\}),
\end{gather*}
and $T_{i, j}$ denotes the $j$-th element in the $i$-th output tuple in $T$.
\end{definition}
\begin{lemma}[Sound Tree Output Approximation]
\label{lemma:sound-tree-output}
The approximate tree output bounds $[\vec{t}_{\min}, \vec{t}_{\max}]$ of a 
decision tree $t: X^n \rightarrow \mathbb{R}^m$ are sound, i.e.,
\[
\forall \vec{x} \in X^n, 
\vec{t}_{\min} \leq t(\vec{x}) \leq \vec{t}_{\max}.
\]
\end{lemma}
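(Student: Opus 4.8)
The plan is to prove the bound pointwise and reduce it entirely to the defining property of a decision tree, namely that the input domain is partitioned into disjoint sets each of which maps to a single output tuple drawn from the image $T$. Since $\vec{t}_{\min}$ and $\vec{t}_{\max}$ are by construction the coordinatewise extremes over the finite set $T$, any individual leaf value is automatically sandwiched between them. I would interpret the vector relation $\vec{t}_{\min} \leq t(\vec{x}) \leq \vec{t}_{\max}$ coordinatewise and establish it one coordinate at a time.

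First I would fix an arbitrary $\vec{x} \in X^n$. By the definition of a decision tree we have $X^n = \bigcup_{i=1}^{k} X_i$ with the $X_i$ disjoint, so there is a unique index $i$ with $\vec{x} \in X_i$, and correspondingly $t(\vec{x}) = \vec{y}_i = (y_{i,1}, \ldots, y_{i,m})$. In particular $t(\vec{x})$ is one of the output tuples in the image $T$, say the $i$-th one, so that its $j$-th coordinate equals $T_{i,j}$ for every $j$. Next I would fix a coordinate $j \in \{1, \ldots, m\}$. Because $\vec{t}_{\min}$ and $\vec{t}_{\max}$ are defined coordinatewise as the minimum and maximum of the finite set $\{T_{1,j}, \ldots, T_{k,j}\}$, and $T_{i,j}$ is a member of that set, the elementary properties of minimum and maximum over a finite set give $\min\{T_{1,j}, \ldots, T_{k,j}\} \leq T_{i,j} \leq \max\{T_{1,j}, \ldots, T_{k,j}\}$. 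Since $j$ was arbitrary, the full coordinatewise inequality $\vec{t}_{\min} \leq t(\vec{x}) \leq \vec{t}_{\max}$ follows, and since $\vec{x}$ was arbitrary the universally quantified claim holds.

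I do not expect a genuine obstacle, as the statement is essentially a restatement of the definitions of $\vec{t}_{\min}$, $\vec{t}_{\max}$, and the decision tree itself. The only points that warrant care are making explicit that $\leq$ on tuples is meant coordinatewise, noting that the extrema are attained because $T$ is finite (there are $k$ leaves) so that the bounds are well-defined, and observing that it is precisely the partition property $X^n = \bigcup_{i=1}^{k} X_i$ that guarantees $t(\vec{x}) \in T$ for every input, so no point can fall outside the computed interval. If two distinct leaves happen to share an output tuple, then $T$ contains fewer than $k$ distinct elements, but this is harmless since duplicate values leave the minimum and maximum unchanged.
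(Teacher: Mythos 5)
Your proposal is correct and follows essentially the same route as the paper's proof: fix an arbitrary $\vec{x}$, observe that $t(\vec{x})$ is one of the tuples in the image $T$, and conclude coordinatewise from the definition of $\min$ and $\max$ over a finite set. The only difference is cosmetic --- you explicitly invoke the partition $X^n = \bigcup_{i=1}^{k} X_i$ to identify the specific tuple $T_{i,\cdot}$, while the paper just notes that each output scalar $v_j$ lies in $\bigcup_{i=1}^{k}\{T_{i,j}\}$.
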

\begin{proof}
For an arbitrary $\vec{x} \in X^n$, let $t(\vec{x}) = (v_1, \ldots, v_m)$. 
Expansion of $\vec{t}_{\min} \leq t(\vec{x}) \leq \vec{t}_{\max}$ then yields
\[
\begin{cases}
  \min\{T_{1, 1}, \ldots, T_{k, 1}\} \leq v_1 \leq 
  \max\{T_{1, 1}, \ldots, T_{k, 1}\} \\
  \vdots\\
  \min\{T_{1, m}, \ldots, T_{k, m}\} \leq v_m \leq 
  \max\{T_{1, m}, \ldots, T_{k, m}\}.
\end{cases}
\]
Since $\vec{x}$ is drawn from the domain of $t$, and $T$ is the image of $t$, 
then the output scalar $v_j$ is captured by $T$. Specifically,
$v_j \in \bigcup\limits_{i=1}^{k}\{T_{i,j}\}$.
Hence, as per the definition of the $\max$ and $\min$ set operators,
\[
\forall j \in \{1,\ldots,m\},
  \min\{T_{1, j}, \ldots, T_{k, j}\}  
  \leq v_j \leq 
  \max\{T_{1, j}, \ldots, T_{k, j}\}.
\]
\end{proof}

\begin{definition}[Approximate Ensemble Output Bounds]
Let $f: X^n \rightarrow \mathbb{R}^m$ be an ensemble of $B$ trees with a post
processing function $p: \mathbb{R}^m \rightarrow \mathbb{R}^m$, i.e.,
\[
  f(\vec{x}) = p(t_1(\vec{x}) + \ldots + t_B(\vec{x})),
\] where $t_i: X^n \rightarrow \mathbb{R}^m$ is the $i$-th tree in the ensemble. 
We then approximate the output of $f$ as an interval 
$[\vec{y}_{\min}, \vec{y}_{\max}]$, where
\begin{gather*}
  \vec{y}_{\min} = p(\vec{t}_{\min_1} + \ldots + \vec{t}_{\min_B}), \\
  \vec{y}_{\max} = p(\vec{t}_{\max_1} + \ldots + \vec{t}_{\max_B}),
\end{gather*}
and $[\vec{t}_{\min_i}$, $\vec{t}_{\max_i}]$ is the approximate tree output 
bounds of 
the $i$-th tree.
\end{definition}
\begin{theorem}[Sound Ensemble Output Approximation]
\label{theorem:sound-approx}
The approximate ensemble output bounds \\
$[\vec{y}_{\min}, \vec{y}_{\max}]$ of an
ensemble $f: X^n \rightarrow \mathbb{R}^m$ with a post processing function 
$p: \mathbb{R}^m \rightarrow \mathbb{R}^m$ are sound if $p$ is monotonic, i.e.,
\begin{gather*}
\forall \vec{x} \in X^n, 
\vec{y}_{\min} \leq f(\vec{x}) \leq \vec{y}_{\max}.
\end{gather*}
\end{theorem}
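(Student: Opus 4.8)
The plan is to prove the two-sided bound by composing three order-preserving steps: bound each tree individually, sum the bounds across the ensemble, and then push the result through the monotonic post-processing function $p$. Throughout, I interpret the vector inequality $\vec{u} \leq \vec{v}$ on $\mathbb{R}^m$ componentwise, and I read ``$p$ monotonic'' as meaning that $p$ preserves this partial order, i.e.\ $\vec{u} \leq \vec{v}$ implies $p(\vec{u}) \leq p(\vec{v})$.

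First I would fix an arbitrary $\vec{x} \in X^n$. Lemma~\ref{lemma:sound-tree-output} applied to each of the $B$ trees gives $\vec{t}_{\min_b} \leq t_b(\vec{x}) \leq \vec{t}_{\max_b}$ for every $b \in \{1, \ldots, B\}$. Since componentwise inequalities are preserved under addition, summing these $B$ relations yields
\[
\sum_{b=1}^{B} \vec{t}_{\min_b} \;\leq\; \sum_{b=1}^{B} t_b(\vec{x}) \;\leq\; \sum_{b=1}^{B} \vec{t}_{\max_b}.
\]
Writing $S(\vec{x}) = \sum_{b=1}^{B} t_b(\vec{x})$ for the raw ensemble sum, this brackets $S(\vec{x})$ between exactly the two sums that appear inside $p$ in the definitions of $\vec{y}_{\min}$ and $\vec{y}_{\max}$.

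Next I would apply $p$ to the bracketing relation. Monotonicity of $p$ preserves both inequalities, so $p\bigl(\sum_{b=1}^{B} \vec{t}_{\min_b}\bigr) \leq p(S(\vec{x})) \leq p\bigl(\sum_{b=1}^{B} \vec{t}_{\max_b}\bigr)$. The left- and right-hand sides are by definition $\vec{y}_{\min}$ and $\vec{y}_{\max}$, while the middle term is $f(\vec{x})$; hence $\vec{y}_{\min} \leq f(\vec{x}) \leq \vec{y}_{\max}$. As $\vec{x}$ was arbitrary, the claim follows for all $\vec{x} \in X^n$.

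The argument is short, and the only step needing genuine care is the treatment of monotonicity: the proof hinges on $p$ preserving the componentwise order, so the hypothesis must be read in that sense and not merely as coordinatewise growth of each output in its matching input. This is also where the hypothesis bites. The division by $B$ used for random forests and the identity map used for regression gradient boosting are order-preserving, whereas the $\softmax$ used for classifiers is not (raising one logit lowers the remaining outputs through the shared normaliser), so for that case the theorem does not apply and a separate, non-componentwise bounding of $p$ would be required.
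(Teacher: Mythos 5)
Your proof is correct and follows essentially the same route as the paper's: apply Lemma~\ref{lemma:sound-tree-output} to each tree, sum the componentwise bounds, and push the bracketing through the order-preserving $p$ (you spell out both sides where the paper proves the upper bound and says ``analogously''). Your closing remark is worth noting, though: the paper itself asserts that $\softmax$ is monotonic and relies on this approximation for gradient boosting classifiers, whereas under the componentwise order-preservation reading that the theorem's proof actually requires, $\softmax$ is \emph{not} monotone (increasing one input decreases the other outputs), so your caveat identifies a real tension with the paper's subsequent application of the theorem.
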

\begin{proof}
\label{proof:sound-approx}
Using Lemma $\ref{lemma:sound-tree-output}$, we know that
$\forall i \in \{1, \ldots, B\}, \forall \vec{x} \in X^n, t_i(\vec{x}) \leq \vec{t}_{\max_i}$, 
and that
\[
t_1(\vec{x}) + \ldots + t_B(\vec{x}) \leq 
\vec{t}_{\max_1} + \ldots + \vec{t}_{\max_B}.
\]
Since $p$ is monotonic, for 
$\vec{v}_1,\vec{v}_2 \in \mathbb{R}^m, \vec{v}_1 \leq \vec{v}_2 \implies p(\vec{v}_1) \leq p(\vec{v}_2)$,
it follows that $\forall \vec{x} \in X^n$,
\begin{gather*}
  t_1(\vec{x}) + \ldots + t_B(\vec{x}) \leq 
  t_{\max_1} + \ldots + t_{\max_B} \implies \\
  p(t_1(\vec{x}) + \ldots + t_B(\vec{x})) \leq 
  p(t_{\max_1} + \ldots + t_{\max_B}) \iff \\
  f(\vec{x}) \leq \vec{y}_{\max}.
\end{gather*}
Analogously, the lower bound $\vec{y}_{\min}$ is also sound.
\end{proof}
These output bounds may be used by a property checker to approximate $f$ in, e.g.,
the plausibility of range property from Section~\ref{sec:properties}, assuming
that the used post-processing function is monotonic. This assumption holds for
random forests since the post-processing function is simply a constant division.
Several prominent gradient boosting machines, e.g., CatBoost~\cite{Prokhorenkova18},
use the $\softmax$ function to post-process multiclass classifications, a function
which is also known to be monotonic~\cite{Gao17}.

Note that this approximation technique is sound, but not complete. If property
checking does not yield ``PASS'' with the approximation (see details below),
the property $\mathbb{P}$ may still hold, and further analysis of the tree
ensemble is required, e.g., by computing all possible equivalence classes (which
is exhaustive and precise). 
\subsection{Implementation}
This section presents implementation details of VoTE Core and VoTE Property 
Checker, aspects that impact accuracy in floating-point computations, and
how VoTE can be adapted to tree ensembles with user-defined post-processing 
functions.
\subsubsection{VoTE Core}
For efficiency, core features in VoTE are implemented as a library in C, and 
utilize a pipeline architecture as illustrated by Figure~\ref{figure:vote-design}
to compute and enumerate equivalence classes.
\begin{figure}[ht]
  \centering
  \includegraphics[width=10cm]{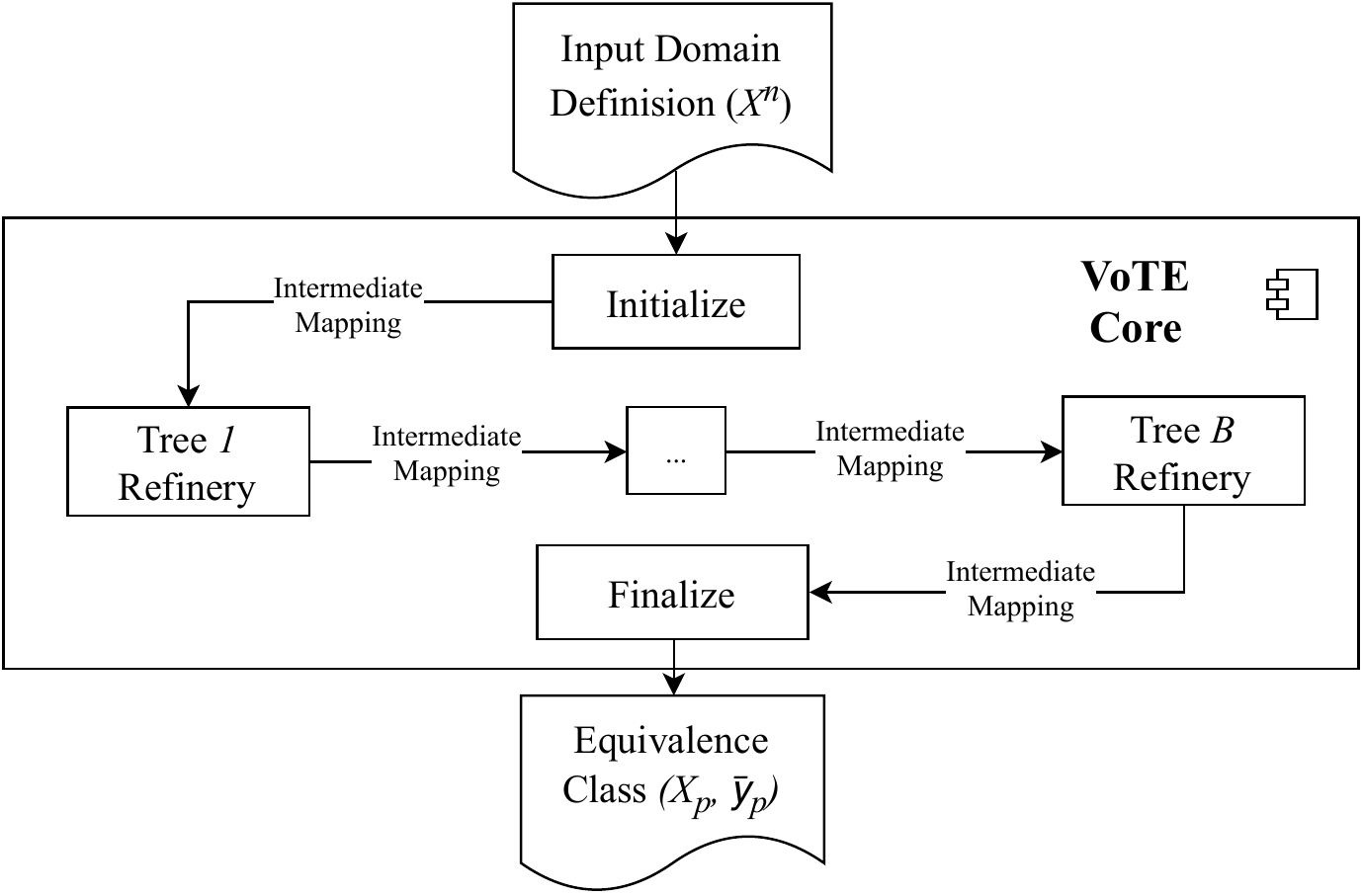}
  \caption{Control flow of equivalence class partitioning in VoTE Core.}
  \label{figure:vote-design}
\end{figure}
The first processing element in the pipeline constructs an intermediate
mapping from the entire input domain to an output tuple of zeros. The final 
processing element applies an optional post-processing function to output
tuples, e.g., a division by the number of trees as in the case of a random 
forest, and the $\softmax$ function in the case of a gradient boosting machine
classifier. In between, there is one refinery element for each tree that splits 
intermediate mappings into disjoint regions according to decision functions in
the trees, and increments the output with values carried by the leaves.

To decouple VoTE from any particular machine learning library, a tree
ensemble is loaded into memory by reading a JSON-formatted file from disk. VoTE 
includes support tools to convert random forests trained by the library
scikit-learn~\cite{Pedregosa11} and gradient boosting machines trained with
CatBoost~\cite{Prokhorenkova18} to this file format.\footnote{The support tools
are published (as free software) at
https://github.com/john-tornblom/vote/tree/v0.1.1/support}

Currently, VoTE Core includes post-processing functions for random forests
and gradient boosting classifiers. Which one to use for a particular problem
instance is specified in the JSON file. VoTE can be easily adapted to
support additional post-processing functions by simply implementing them inside
VoTE Core, or integrating them with a user-defined property checker.


\subsubsection{VoTE Property Checker}
VoTE includes two pre-defined property checkers which are parameterized and
executed from a command-line interface; the plausibility of range property 
checker, and the robustness property checker.

The plausibility of range property checker first uses the output bound 
approximation to check for property violations, and resorts to equivalence-class
analysis only when a violation is detected when using the approximation. 

The robustness property checker checks that all points $X_r$ within a hypercube 
with sides $\epsilon$, centered around a test point $\vec{x}_t$, map to the 
same output. Note that selecting which test points to include in the 
verification may be problematic. In principle, all points in the input domain
should be checked for robustness, but with \emph{classifiers},
there is always a hyperplane separating two classes from each other, and thus
there are always points which violate the robustness property (adjacent to each side of
the hyperplane). Hence, the property is only applicable to points at distances
greater than $\epsilon$ from any classification boundaries.

VoTE also includes Python bindings for easy prototyping of domain-specific 
property checkers. Example~\ref{example:vote-python} depicts an implementation 
of the plausibility of range property that uses these Python bindings to perform 
sanity checking for a classifier's output.
\begin{example}[Plausibility of Range for a Classifier]
  Ensure that the probability of all classes in every prediction is within 
  $[0, 1]$. 
  \label{example:vote-python}
  \lstset{language=Python, morekeywords={assert}}
  \begin{lstlisting}
  import sys
  import vote
  
  def plausibility_of_range(mapping, alpha=0, beta=1):
      minval = min([mapping.outputs[dim].lower
                   for dim in range(mapping.nb_outputs)])
                   
      maxval = max([mapping.outputs[dim].upper
                   for dim in range(mapping.nb_outputs)])
                   
      return (minval >= alpha) and (maxval <= beta)
    
  e = vote.Ensemble(sys.argv[1]) # load model from disk
  assert e.forall(plausibility_of_range)
  \end{lstlisting}
\end{example}

\subsubsection{Computational Accuracy}
Implementations of tree ensembles normally approximate real values as floating-%
point numbers, and thus may suffer from inaccurate computations. In general,
VoTE and the software subject to verification must use the same precision on
floating-point numbers and prediction function in, e.g.,
Definition~\ref{def:random-forest} to get a compatible property satisfaction. 
In this version of VoTE, we use the same representation so that the calculation
errors are the same as in the machine learning library 
scikit-learn~\cite{Pedregosa11} and CatBoost~\cite{Prokhorenkova18}. 
Specifically, we approximate real values as 32-bit floating-point numbers, and 
implement the prediction functions literally as presented in, e.g., 
Definition~\ref{def:random-forest}, i.e., by first computing the sum of all 
individual trees, then dividing by the number of trees. Other machine learning 
libraries may use 64-bit floating-point numbers, and may implement the 
prediction function differently, e.g.,
$$
f(\vec{x}) = \sum \limits_{b=1}^{B} \frac{t_b(\vec{x})}{B}.
$$
This would be easily changeable in VoTE.

\section{Case Studies}
\label{sec:case-studies}
In this section, we present an evaluation of VoTE on two case studies from the 
literature where neural networks have been analyzed for compliance with 
interesting properties. Each case study defines a training set and a test set.
We used scikit-learn~\cite{Pedregosa11} to train random forests, and 
CatBoost~\cite{Prokhorenkova18} to train gradient boosting machines. For random
forests, all training parameters except the number of trees and maximum tree 
depth were kept constant and at their default values. When training gradient 
boosting machines, we also adjusted the learning rate to 0.5 since the default
value demonstrated poor accuracy on our case studies. Furthermore, since
gradient boosting machines typically use shallower trees than random forests,
we used different tree depths and different number of trees for these types of ensembles. 
In fact, CatBoost is limited to a maximum tree depth of 16.

We evaluated accuracy on each trained model against its test set, i.e., the
percentage of samples from the test set where there are no misclassifications,
in order to ensure that we were verifying instances that were interesting 
enough to evaluate. We then developed verification cases for the plausibility 
of range and robustness against noise properties (from 
Section~\ref{sec:properties}) using VoTE. The time spent on verification was
recorded for each trained model as presented below. Next, we evaluated the 
least-points-first node selection strategy (from 
Section~\ref{sec:select-strategy}) against two baselines on all case studies 
(always picking the left child first, and always picking the right child first).

Experiments were conducted on a single machine with an Intel Core i5 2500\,K CPU
and 16\,GB RAM. Furthermore, we used a GeForce GTX 1050 Ti GPU with 4\,GB of memory
to speed up training of gradient boosting machines.

\subsection{Vehicle Collision Detection}
\label{sec:vehicle-detection}
In this case study, we verified properties of tree ensembles trained to detect
collisions between two moving vehicles traveling along curved trajectories at
different speeds. Each verified model accepts six input variables, emits
two output variables, and contains 20--25 trees with depths between 5--20.

\subsubsection{Dataset}
We used a simulation tool from Ehlers~\cite{Ehlers17} to generate 30,000
training samples and 3,000 test samples. Unlike neural networks which Ehlers
used in his case study, the size of a tree ensemble is limited by the amount of
data available during training. Hence, we generated ten times more training data
than Ehlers to ensure that sufficient data is available for the size and number
of trees assessed in our case study. Each sample contains the relative distance
between the two vehicles, the speed and starting direction of the second 
vehicle, and the rotation speed of both vehicles. Each feature in the dataset
is given in normalized form (position, speed, and direction fall in the range
$[0, 1]$, and rotation speed in the range $[-1, 1]$).

\subsubsection{Robustness}
We verified the robustness against noise for all trained models by defining
input regions surrounding each sample in the test set with the robustness margin
$\epsilon = 0.05$, which amounts to a 5\,\% change since the data is normalized. 
Table~\ref{tbl:vcd-robusteness} lists tree ensembles included in the experiment
with their maximum tree depth $d$, number of trees $B$, accuracy of the
classifications (Accuracy), elapsed time $T$ during verification, and the 
percentage of samples from the test set where there were no misclassifications 
within the robustness region (Robustness).

\begin{table}[ht]
  \centering
  \caption{Accuracy, robustness, and elapsed verification time (T) of tree
           ensembles in the vehicle collision detection case study.}
  \label{tbl:vcd-robusteness}
  \begin{tabular}{crrccr}
    \hline
    Type & \mcc{$d$} & \mcc{$B$} & Accuracy\,(\%) & Robustness\,(\%) & $T$\,(s) \\
    \hline
    RF   & 10        & 20        & 90.4           & 48.9             & 56       \\
    RF   & 10        & 25        & 90.0           & \textbf{50.3}    & 286      \\
    RF   & 15        & 20        & 93.0           & 34.1             & 273      \\
    RF   & 15        & 25        & 92.9           & 35.1             & 1651     \\
    RF   & 20        & 20        & 94.2           & 29.5             & 367      \\
    RF   & 20        & 25        & \textbf{94.5}  & 29.6             & 2520     \\
    \hdashline
    GB   & 5         & 20        & 93.4           & \textbf{44.5}    & 1        \\
    GB   & 5         & 25        & 93.8           & 40.4             & 2        \\    
    GB   & 10        & 20        & 95.5           & 34.4             & 26       \\
    GB   & 10        & 25        & 95.7           & 34.0             & 69       \\
    GB   & 15        & 20        & 95.8           & 34.0             & 222      \\
    GB   & 15        & 25        & \textbf{96.0}  & 33.8             & 511      \\
    \hline
  \end{tabular}
\end{table}

Increasing the maximum depth of trees increased accuracy on the test set, but
reduced the robustness against noise. Adding more trees to a random forest
slightly improves its robustness, while gradient boosting machines decreased
their robustness against noise as more trees were added. These observations 
suggest that the models were over-fitted with noiseless examples during 
training, and thus adding noisy examples to the training set may improve 
robustness. The elapsed time during verification was significantly less for 
gradient boosting machines than random forests (using the same parameters). 
The significant difference in elapsed time between, e.g., gradient boosting 
machines with $\{d=5, B=20\}$ and $\{d=15, B=25\}$, may seem counter-intuitive
at first. However, recall that the theoretical upper limit of the number of 
path combinations in a tree ensemble is $2^{d \cdot B}$, and that 
$2^{5 \cdot 20} \ll 2^{15 \cdot 25}$.

Since our observations suggest that the models were over-fitted, we generated a
new training data set which contains 750,000 additional samples with additive 
noise drawn from the uniform distribution. Specifically, for each sample in the 
noiseless training data set with an input tuple $\vec{x}$, we added 25 new 
samples with the input $\vec{x} + \vec{z}$ and the same output, where $\vec{z}$
is a tuple of elements drawn from $\mathcal{U}(-\epsilon, \epsilon)$. We then 
reran the experiments after training new models on the noisy data set. The
results from these experiments are listed in
Table~\ref{tbl:vcd-robusteness-noisy} in the same format as before.
\begin{table}[ht]
  \centering
  \caption{Accuracy, robustness, and elapsed verification time (T) of tree
           ensembles in the vehicle collision detection case study, trained
           on a data set with additive random noise.}
  \label{tbl:vcd-robusteness-noisy}
  \begin{tabular}{crrccr}
    \hline
    Type & \mcc{$d$} & \mcc{$B$} & Accuracy\,(\%) & Robustness\,(\%) & $T$\,(s) \\
    \hline
    RF   & 10        & 20        & 89.3           & 59.4             & 209      \\
    RF   & 10        & 25        & 89.1           & \textbf{60.0}    & 904      \\
    RF   & 15        & 20        & 92.4           & 42.4             & 2779     \\
    RF   & 15        & 25        & 92.2           & 42.4             & 12833    \\
    RF   & 20        & 20        & 93.6           & 31.3             & 7640     \\
    RF   & 20        & 25        & \textbf{93.7}  & 32.7             & 53387    \\
    \hdashline
    GB   & 5         & 20        & 92.7           & \textbf{50.7}    & 2        \\
    GB   & 5         & 25        & 93.1           & 46.2             & 4        \\
    GB   & 10        & 20        & 94.6           & 35.7             & 44       \\
    GB   & 10        & 25        & 94.7           & 34.2             & 142      \\
    GB   & 15        & 20        & 94.6           & 33.3             & 391      \\
    GB   & 15        & 25        & \textbf{94.8}  & 32.5             & 1642     \\
    \hline
  \end{tabular}
\end{table}

Adding noise to the training data reduced the accuracy with at most 1.2\,\%, but
improved on the robustness against perturbations with up to 10.5\,\%. More 
interestingly, models trained on noisy data were more time-consuming to verify
compared to those trained on noiseless data, particularly when using random
forests. After careful inspection of trees trained on the different data sets,
we noticed that random forests trained on the noiseless data contain branches
that are shorter than the maximum tree depth. Consequently, there are fewer path
combinations in these random forests compared to those trained on the noisy data,
which could explain the differing measurements in elapsed verification times.

\subsubsection{Node Selection Strategy}
Next, we evaluated the least-points-first node selection strategy against the 
two baseline strategies. Table~\ref{tbl:vcd-node-select} lists the elapsed 
verification time for the evaluated models when using the least-points-first 
node selection strategy ($T$), always selecting the left child 
first as implemented in the tool VoRF from previous work~\cite{Tornblom19} 
($T_\mathit{VoRF}$), and always selecting the right child first ($T_\mathit{right}$).
\begin{table}[ht]
  \centering
  \caption{Elapsed time for different node selection strategies in the
           vehicle collision detection case study.}
  \label{tbl:vcd-node-select}
  \begin{tabular}{crrrrr}
    \hline
    Type & \mcc{$d$} & \mcc{$B$} & $T_\mathit{VoRF}$\,(s) & $T_\mathit{right}$\,(s) & $T$\,(s) \\
    \hline
    RF   & 10        & 20        & 79             & 74             & 56       \\
    RF   & 10        & 25        & 422            & 374            & 286      \\
    RF   & 15        & 20        & 399            & 441            & 273      \\
    RF   & 15        & 25        & 2351           & 2457           & 1651     \\
    RF   & 20        & 20        & 930            & 847            & 367      \\
    RF   & 20        & 25        & 5499           & 4522           & 2520     \\
    \hdashline
    GB   & 5         & 20        & 1              & 1              & 1        \\
    GB   & 5         & 25        & 3              & 3              & 2        \\
    GB   & 10        & 20        & 30             & 31             & 26       \\
    GB   & 10        & 25        & 84             & 85             & 69       \\
    GB   & 15        & 20        & 265            & 259            & 222      \\
    GB   & 15        & 25        & 618            & 616            & 511      \\
    \hline
  \end{tabular}
\end{table}
The least-points-first node selection strategy was more effective on random 
forests than on gradient boosting machines, with speedup factors in the range 
1.3--2.5 versus 1.0--1.5, respectively. However, gradient boosting machines 
were already significantly easier to verify than random forests (with the same 
number of trees and depth).

\subsubsection{Scalability}
\label{sec:vehicle-scalability}
Next, we assessed the scalability of VoTE Core when the number of trees grows by
verifying the trivial property $\mathbb{P} = \mathit{true}$ which accepts all
input-output mappings. We implemented this trivial property in a verification
case that also counts the number of equivalence classes emitted by VoTE Core. We
then executed the verification case for models trained with a maximum tree depth
of $d = 10$. The recorded number of equivalence classes $C$ for different number
of trees $B$ is depicted in Figure~\ref{figure:vcd-scalability} on a logarithmic
scale.
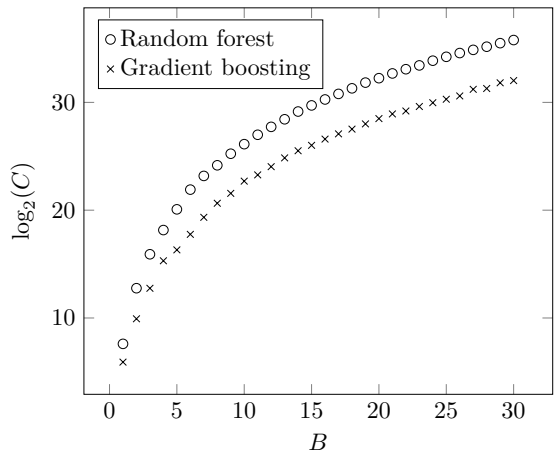
\begin{figure}[ht]
  \centering
  \begin{tikzpicture}[scale=0.9]
    \begin{axis}[
      xlabel={$B$},
      ylabel={$\log_2(C)$},
      legend cell align={left},
      legend pos={north west},
      legend entries={Random forest, Gradient boosting}]
      \addplot[scatter src=explicit, only marks, mark=o] table[
          x expr=\thisrowno{0}, 
          y expr=log2(\thisrowno{1}) ]
       {
         B  Cardinality
         1  194
         2  6947
         3  61562
         4  292112
         5  1107049
         6  3935246
         7  9491897
         8  18797511
         9  39563182
         10 73177470
         11 133862181
         12 222632459
         13 362135199
         14 596223534
         15 883465913
         16 1294736880
         17 1852694610
         18 2651408741
         19 3829488642
         20 5090515209
         21 6905759122
         22 9058250244
         23 11574497153
         24 15676619289
         25 20113003227
         26 25619263227
         27 31446269326
         28 38510754321
         29 48385071382
         30 59117959084
       };
       \addplot[scatter src=explicit, only marks, mark=x] table[
          x expr=\thisrowno{0}, 
          y expr=log2(\thisrowno{1}) ]
       {
         B  Cardinality
         1  60
         2  972
         3  6912
         4  40500
         5  81648
         6  221760
         7  662400
         8  1638000
         9  3093552
         10 6791400
         11 10138590
         12 17203200
         13 30250584
         14 47880000
         15 67815902
         16 101347200
         17 141796200
         18 192535200
         19 269233216
         20 378675000
         21 510898752
         22 617569920
         23 823987440
         24 1050726600
         25 1314201600
         26 1624613760
         27 2487270240
         28 2621356920
         29 3782643072
         30 4361280000
       };

    \end{axis}
  \end{tikzpicture}
  \caption{Number of equivalence classes $C$ on a logarithmic scale from the 
  vehicle collision detection case study for different number of trees $B$ with
  a depth $d=10$.}
  \label{figure:vcd-scalability}
\end{figure}
The number of equivalence classes increased exponentially as more trees were 
added, but the magnitude of the growth decreased for each added tree. The number
of equivalence classes for large number of trees are significantly smaller than
the upper limit of $2^{d \cdot B}$ (which occurs when there are no shared 
features amongst trees, and thus each path combination yields a distinct
equivalence class). Furthermore, the gradient boosting machines consistently
yield significantly fewer equivalence classes than random forests, which
could explain the differences in verification times we observed between the 
two types of models (with the same number of trees and depth).

\subsubsection{Plausibility of Range}
Finally, we verified the plausibility of range property (here ensuring that all 
predicted probabilities are in the range $[0, 1]$). All random forests passed 
the verification case within fractions of a second thanks to the fast output 
bound approximation algorithm. For gradient boosting machines however, the 
output approximations were too conservative; hence we resorted the precise 
technique. All gradient boosting machines passed the verification case, and 
the elapsed time during verification for different node selection strategies 
are listed in Table~\ref{tbl:vcd-plausible-range}.

\begin{table}[ht]
  \centering
  \caption{Elapsed time for different node selection strategies when
           verifying gradient boosting machines in the vehicle collision 
           detection case study.}
  \label{tbl:vcd-plausible-range}
  \begin{tabular}{rrrrr}
    \hline
    \mcc{$d$} & \mcc{$B$} & $T_\mathit{VoRF}$\,(s) & $T_\mathit{right}$\,(s) & $T_\mathit{VoTE}$\,(s) \\
    \hline
    5         & 20        & 2              & 2              & 2        \\
    5         & 25        & 10             & 10             & 9        \\
    10        & 20        & 304            & 289            & 260      \\
    10        & 25        & 1067           & 1015           & 917      \\
    15        & 20        & 3032           & 3230           & 2910     \\
    15        & 25        & 10582          & 9900           & 8750     \\
    \hline
  \end{tabular}
\end{table}
The least-points-first node selection strategy consistently outperformed the two
baseline strategies, with speedup factors in the range 1.0--1.2.

\subsection{Digit Recognition}
In this case study, we verified properties of tree ensembles trained to
recognize images of hand-written digits.

\subsubsection{Dataset}
The MNIST dataset~\cite{MNIST} is a collection of hand-written digits commonly
used to evaluate machine learning algorithms. The dataset contains 70,000
gray-scale images with a resolution of $28 \times 28$ pixels at 8\,bpp, encoded
as tuples of 784 scalars. We randomized the dataset and split into two subsets;
a 85\,\% training set, and a 15\,\% test set (a similar split was used
in~\cite{MNIST}).

\subsubsection{Robustness}
\label{sec:DigitRecognition-Robustness}
We defined input regions surrounding each sample in the test set with the 
robustness margin $\epsilon = 1$, which amounts to a 0.5\,\% lighting change per
pixel in a 8\,bpp gray-scaled image. Each input region contains $2^{784}$ noisy 
images, which would be too many for VoTE to handle within a reasonable amount
of time. Consequently, we reduced the complexity of the problem significantly by
only considering robustness against noise within a sliding window of
$5 \times 5$ pixels. For a given sample from the test set, noise was added within
the $5 \times 5$ window, yielding $2^{5 \cdot 5}$ noisy images. This operation was
then repeated on the original image, but with the window placed at an offset of
1px relative to its previous position. Applying this operation on an entire
image yields $2^{5 \cdot 5} \cdot (28-5)^2 \approx 2^{34}$ distinct noisy images per
sample from the test set, and about $10^{14}$ noisy images when applied to the
entire test set.

Figure~\ref{figure:mnist-missclass} depicts one of many examples from the MNIST 
dataset that were misclassified by the tree ensemble with $B = 25$ and $d = 10$. 
Since the added noise is invisible to the naked eye, the noise (a single pixel) 
is highlighted in red inside the circle.
\begin{figure}[ht]
  \centering
  \includegraphics[width=3cm]{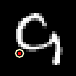}
  \caption{A misclassified noisy sample from the MNIST dataset.}
  \label{figure:mnist-missclass}
\end{figure}

Table~\ref{tbl:mnist-robusteness} lists tree ensembles included in the
experiment with their maximum tree depth $d$, number of trees $B$, accuracy on
the test set (Accuracy), elapsed time $T$ during verification, and the
percentage of samples from the test set where there were no misclassifications
within the robustness region (Robustness).
\begin{table}[ht]
  \centering
  \caption{Accuracy and robustness of tree ensembles in the digit recognition
           case study.}
  \label{tbl:mnist-robusteness}
  \begin{tabular}{crrccr}
    \hline
    Type & \mcc{$d$} & \mcc{$B$} & Accuracy\,(\%) & Robustness\,(\%) & $T$\,(s) \\
    \hline
    RF   & 10        & 20        & 93.8           & 75.2             & 254      \\
    RF   & 10        & 25        & 94.2           & 74.8             & 1217     \\
    RF   & 15        & 20        & 95.8           & 82.8             & 436      \\
    RF   & 15        & 25        & 96.0           & \textbf{84.0}    & 2141     \\
    RF   & 20        & 20        & 96.0           & 82.3             & 391      \\
    RF   & 20        & 25        & \textbf{96.4}  & 83.7             & 1552     \\
    \hdashline
    GB   & 5         & 75        & 94.5           & 60.9             & 129      \\
    GB   & 5         & 150       & 95.3           & 67.3             & 301      \\
    GB   & 5         & 300       & 95.7           & 68.6             & 1551     \\
    GB   & 10        & 25        & 94.9           & 65.8             & 82       \\
    GB   & 10        & 50        & 95.7           & 73.9             & 159      \\
    GB   & 10        & 100       & \textbf{96.3}  & \textbf{78.8}    & 486      \\
    \hline
  \end{tabular}
\end{table}

Increasing the complexity of a tree ensemble slightly increased its accuracy,
and significantly increased its robustness against noise. The elapsed time 
during verification was significantly less for gradient boosting machines than
random forests (using the same parameters).

\subsubsection{Node Selection Strategy}
Next, we evaluated the least-points-first node selection strategy against the 
two baseline strategies. Table~\ref{tbl:mnist-node-select} lists the elapsed
verification time for the evaluated models when using the least-points-first
node selection strategy ($T$), always selecting the left child 
first as implemented in the tool VoRF from previous work~\cite{Tornblom19} 
($T_\mathit{VoRF}$), and always selecting the right child first
($T_\mathit{right}$).

\begin{table}[ht]
  \centering
  \caption{Elapsed time for different child node selection strategies in the
           digit recognition case study.}
  \label{tbl:mnist-node-select}
  \begin{tabular}{crrrrr}
    \hline
    Type & \mcc{$d$} & \mcc{$B$} & $T_\mathit{VoRF}$\,(s) & $T_\mathit{right}$\,(s) & $T$\,(s) \\
    \hline
    RF   & 10        & 20        & 2009            & 1093          & 254      \\
    RF   & 10        & 25        & 10724           & 5386          & 1217     \\
    RF   & 15        & 20        & 4474            & 1837          & 436      \\
    RF   & 15        & 25        & 23718           & 8960          & 2141     \\
    RF   & 20        & 20        & 4037            & 1817          & 391      \\
    RF   & 20        & 25        & 17360           & 7228          & 1552     \\
    \hdashline
    GB   & 5         & 75        & 836            & 376            & 129      \\
    GB   & 5         & 150       & 2419           & 848            & 301      \\
    GB   & 5         & 300       & 18829          & 4717           & 1551     \\
    GB   & 10        & 25        & 442            & 268            & 82       \\
    GB   & 10        & 50        & 1157           & 561            & 159      \\
    GB   & 10        & 100       & 5772           & 1618           & 486      \\
    \hline
  \end{tabular}
\end{table}
The effectiveness of our node selection strategy was similar for both random
forests and gradient boosting machines, with significant speed up factors in 
the range 4.2--11.2 and 2.8--12.1, respectively.

\subsubsection{Scalability}
Next, we assessed the scalability of VoTE Core when the number of trees grows 
by verifying the trivial property $\mathbb{P} = \mathit{true}$. This was done in a 
similar way as described in the vehicle collision detection use case presented 
in Section~\ref{sec:vehicle-scalability}. We then executed the verification case
for all models with a tree depth of $d = 10$. Enumerating all possible 
equivalence classes was intractable for tree ensembles with more than $B = 4$ 
trees. We aborted the experiment after running the verification case with a 
tree ensembles of $B = 5$ for 72\,h. Figure~\ref{figure:mnist-scalability} 
depicts the four data points we managed to acquire for the two types of models.
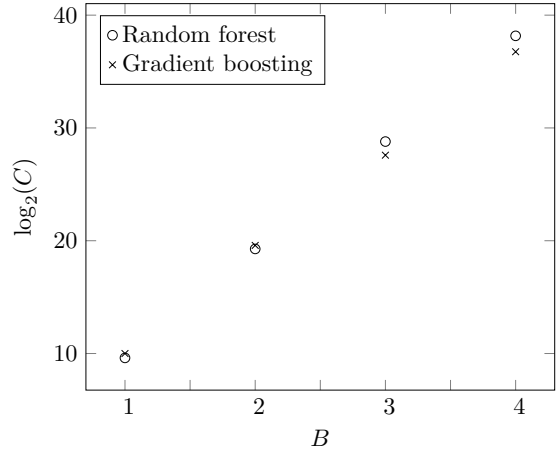
\begin{figure}[ht]
  \centering
  \begin{tikzpicture}[scale=0.9]
    \begin{axis}[
      xlabel={$B$},
      xticklabel={
        \pgfmathtruncatemacro{\IntegerTick}{\tick}
        \pgfmathprintnumberto[verbatim,fixed,precision=3]{\tick}\tickAdjusted
        \pgfmathparse{\IntegerTick == \tickAdjusted ? 1: 0}
        \ifnum\pgfmathresult>0\relax$\IntegerTick$\else\fi
      },  
      legend cell align={left},
      legend pos={north west},
      legend entries={Random forest, Gradient boosting},
      ylabel={$\log_2(C)$}]
      \addplot[scatter src=explicit, only marks,mark=o] table [
          x expr=\thisrowno{0}, 
          y expr=log2(\thisrowno{1})
      ]
       {
         B  Cardinality
         1  782
         2  633172
         3  464878397
         4  308175596371
      };
      \addplot[scatter src=explicit, only marks, mark=x] table[
          x expr=\thisrowno{0}, 
          y expr=log2(\thisrowno{1})
      ]
       {
         B  Cardinality
         1  1024
         2  786432
         3  201326592
         4  115964116992
      };
    \end{axis}
  \end{tikzpicture}
  \caption{Number of equivalence classes $C$ on a logarithmic scale from the 
  digit recognition case study for different number of trees $B$ with
  a depth $d=10$.}
  \label{figure:mnist-scalability}
\end{figure}

The number of equivalence classes increased exponentially as more trees were 
added, without demonstrating any signs of stagnation. The ability to discard 
infeasible path combinations in a tree ensembles is an essential ingredient to
our method. When tree ensembles are trained on high-dimensional data, the number
of features shared between trees is relatively low, so it is not surprising that
our method experiences combinatorial path explosion. 

As described in Section~\ref{sec:DigitRecognition-Robustness}, state explosion 
was already anticipated when considering robustness to noise when changing 
arbitrary pixels in the whole state space. This was the underlying reason why
the property $\mathbb{P}$ to verify was formulated as robustness to noise when
changing pixels within a sliding window of $5 \times 5$ pixels, which 
significantly reduced the search space. This reduction in space was anticipated
based on some intuition about the application domain. Not having this intuition
may lead to trying to prove properties that are tougher than required, or 
eliminating a class of applications (e.g., image processing) with long 
verification times.

\subsubsection{Plausibility of Range}
Finally, we verified the plausibility of range property (again ensuring that 
that all predicted probabilities are in the range $[0, 1]$). All random forests
passed the verification case within fractions of a second thanks to the fast 
output bound approximation algorithm. For gradient boosting machines however,
the output approximations were too conservative. Since the precise technique 
does not scale well on models trained on high-dimensional data, we were unable
to verify the plausibility of range property of gradient boosting machines in 
this case study.

\section{Conclusions and Future Work}
\label{sec:conclusions}
In this paper, we proposed a method to formally verify properties of tree 
ensembles. Our method exploits the fact that several trees make decisions based
on a shared subset of the input variables, and thus several path combinations in
tree ensembles are infeasible. We implemented the method in a tool called VoTE, 
and demonstrated its scalability on two case studies. 

In the first case study, a collision detection problem with six input
variables, we demonstrated that problems with a low-dimensional input space can
be verified using our method within a reasonable amount of time. In the second 
case study, a digit recognition problem with 784 input variables, we 
demonstrated that our method copes with high-dimensional input space when
verifying robustness against noise. But it does so only if the systematically
introduced noise does not attempt to exhaustively cover all possibilities.
Since the number of shared input variables between trees is low, we observed a
combinatorial explosion of paths in the tree ensembles. This combinatorial
explosion also appeared when we verified the plausibility of range property of
gradient boosting machines where the fast approximation technique was too 
conservative. However, when verifying the plausibility of range property of 
random forests, the approximation technique was sufficiently accurate, and 
verification was completed within seconds.

There exist properties and systems for which formal verification may not be
successfully deployed. Conversely, using random sampling for testing robustness
may not be acceptable as an argument in a safety assurance case, unless a clear
relation between the test samples and the property needed for safety can be
mathematically established. Additional work to identify where to apply testing
and where formal verification saves time is a useful direction to explore.

For future work, we plan to investigate different tree selection strategies, 
i.e., strategies that determine in which order trees in an ensemble are analyzed.
We also consider combining our approximation technique with our precise
technique into an abstraction-refinement scheme. Other directions 
of work include creating new properties that are meaningful in the context of 
the problem at hand, e.g., decisive classifications, and applying to use cases 
where control is involved (and not only sensing).

\subsection*{Acknowledgements}
This work was partially supported by the Wallenberg AI, Autonomous Systems and 
Software Program (WASP) funded by the Knut and Alice Wallenberg Foundation.


\bibliographystyle{model1b-num-names}
\bibliography{19vote}

\end{document}